\newcommand{\modelname}{GBK-GNN}
\newcommand{\Fig}{Fig.\xspace}
\newcommand{\Eq}{Eq.\xspace}
\newtheorem{defn}{Definition}
  \providecommand\BibTeX{{%
    \normalfont B\kern-0.5em{\scshape i\kern-0.25em b}\kern-0.8em\TeX}}}
\begin{document}

\title{\modelname{}: Gated Bi-Kernel Graph Neural Networks for Modeling Both Homophily and Heterophily}

\author{Lun Du}
\authornote{Equal Contribution}
\authornote{Corresponding Author}
\email{lun.du@microsoft.com}
\affiliation{
    \institution{Microsoft Research Asia}
    \city{Beijing}
    \country{China}
}

\author{Xiaozhou Shi}
\authornotemark[1]
\authornote{Work performed during the internship at MSRA}
\email{xzh0u.sxz@gmail.com}
\affiliation{%
\institution{Beijing University of Technology}
\city{Beijing}
\country{China}
}

\author{Qiang Fu}
\authornotemark[1]
\email{qifu@microsoft.com}
\affiliation{
    \institution{Microsoft Research Asia}
    \city{Beijing}
    \country{China}
}

\author{Xiaojun Ma}
\authornotemark[3]
\email{mxj@pku.edu.cn}
\affiliation{
\institution{Peking University}
\city{Beijing}
\country{China}
}

\author{Hengyu Liu}
\authornotemark[3]
\email{hengyuliu94@gmail.com}
\affiliation{
    \institution{Northeastern University}
    \city{Shenyang}
    \country{China}
}

\author{Shi Han}
\author{Dongmei Zhang}
\email{{shihan, dongmeiz}@microsoft.com}
\affiliation{
\institution{Microsoft Research Asia}
\city{Beijing}
\country{China}
}

\begin{abstract}
  Graph Neural Networks (GNNs) are widely used on a variety of graph-based machine learning tasks. For node-level tasks, GNNs have strong power to model the homophily property of graphs (i.e., connected nodes are more similar), while their ability to capture heterophily property is often doubtful. This is partially caused by the design of the feature transformation with the same kernel for the nodes in the same hop and the followed aggregation operator. One kernel cannot model the similarity and the dissimilarity (i.e., the positive and negative correlation) between node features simultaneously even though we use attention mechanisms like Graph Attention Network (GAT), since the weight calculated by attention is always a positive value. In this paper, we propose a novel GNN model based on a bi-kernel feature transformation and a selection gate. Two kernels capture homophily and heterophily information respectively, and the gate is introduced to select which kernel we should use for the given node pairs. We conduct extensive experiments on various datasets with different homophily-heterophily properties. The experimental results show consistent and significant improvements against state-of-the-art GNN methods.
\end{abstract}

\begin{CCSXML}
<ccs2012>
   <concept>
       <concept_id>10010147.10010257.10010293.10010294</concept_id>
       <concept_desc>Computing methodologies~Neural networks</concept_desc>
       <concept_significance>500</concept_significance>
       </concept>
   <concept>
       <concept_id>10002951.10003260.10003282.10003292</concept_id>
       <concept_desc>Information systems~Social networks</concept_desc>
       <concept_significance>500</concept_significance>
       </concept>
 </ccs2012>
\end{CCSXML}

\ccsdesc[500]{Computing methodologies~Neural networks}
\ccsdesc[500]{Information systems~Social networks}

\keywords{graph neural networks, graph mining, heterophily, homophily}

\maketitle
\begin{figure}[htbp]
\centering
\begin{minipage}[t]{0.20\textwidth}
\centering
\includegraphics[width=1.1\textwidth]{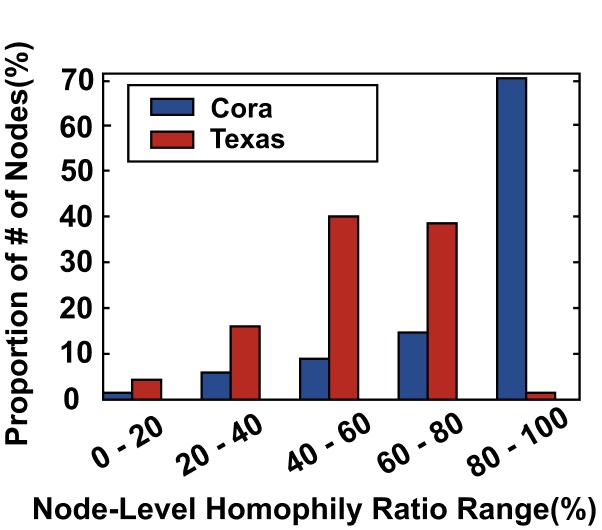}

\end{minipage}
\hspace{0.15in}
\begin{minipage}[t]{0.20\textwidth}
\centering
\includegraphics[width=1.1\textwidth]{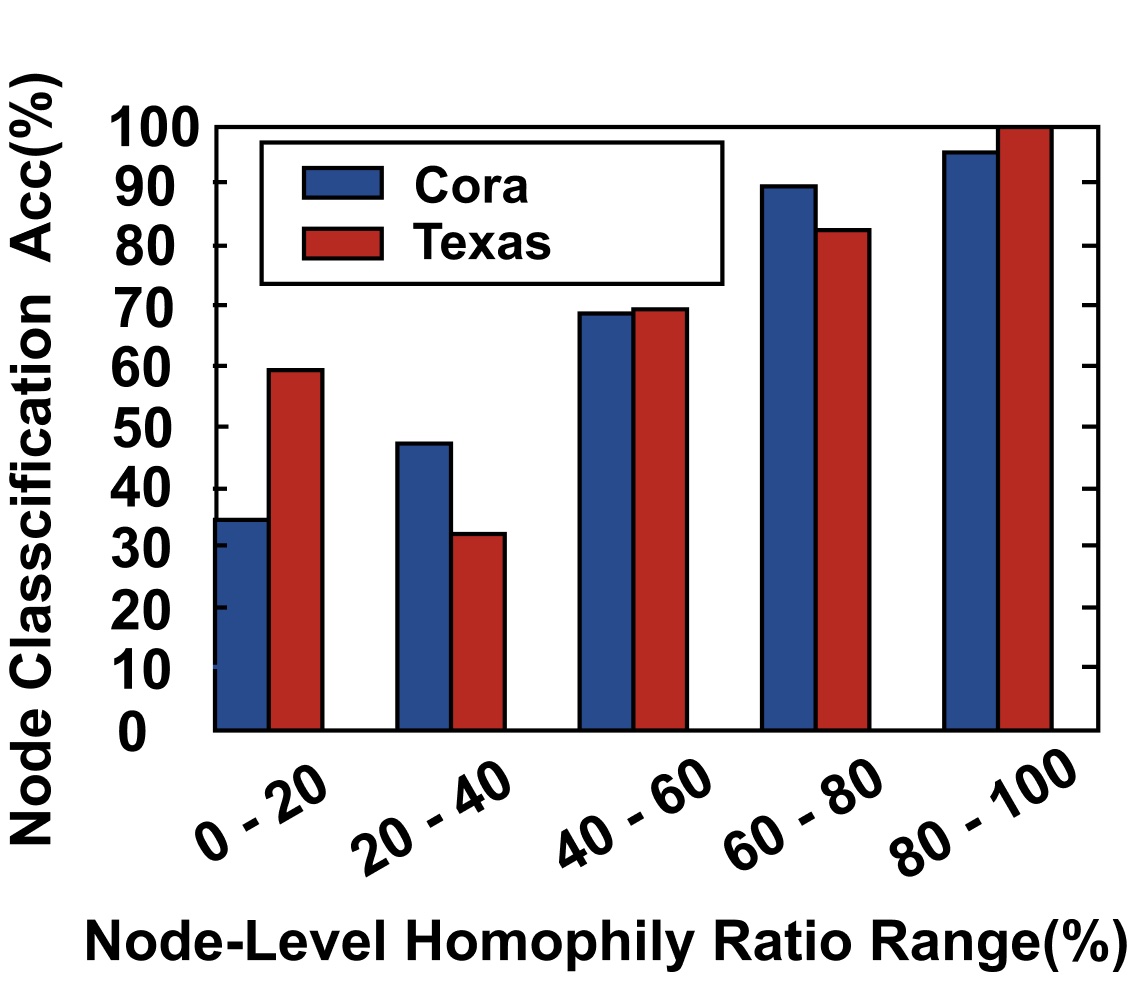}
\label{fig:range_homo}
\end{minipage}
\caption{The left figure illustrates the proportion of number of nodes ordered by node-level homophily ratio range. The right one illustrates node classification accuracy by GCN ordered by node-level homophily ratio range.}
\label{fig:observation}
\end{figure}
\section{Introduction}
Graph structured data are ubiquitous due to their vivid and precise depiction of relational objects. As a powerful approach for learning graph representations, Graph Neural Networks (GNNs), such as ChebNet \cite{defferrard2016convolutional}, GCN \cite{kipf2016semi}, GraphSAGE \cite{DBLP:journals/corr/HamiltonYL17} and GAT \cite{velivckovic2017graph}, are widely used on various applications ranging from social networks to biological networks \cite{xu2021self,du2021tabularnet,wei2021designing,wang2020cocogum}. The process of  node representation learning can be roughly summarized as two steps: (1) feature transformation via one learnable kernel; (2) aggregation of transformed feature from neighbors. The learned node representations can be fed to the downstream neural networks for specific learning tasks such as node classification, link prediction and community detection \cite{tang2011learning,bhagat2011node,long2019hierarchical,wang2019tag2gauss,wang2019tag2vec}. 

Although GNNs have achieved remarkable results on a wide range of scenarios,
their ability to model different properties of graphs has not been analyzed. Many previous studies show that GNNs are more effective in dealing 
with graphs with good \textbf{homophily} property (i.e., connected nodes are more similar), while their ability to capture \textbf{heterophily} property (i.e., connected nodes are more opposite) is often doubtful \cite{pei2019geom,zhu2020graph,zhu2020beyond,maurya2021improving,ma2021homophily,wang2020gcn}. 
In addition, even for the graphs with good homophily property, existing works do not model them well enough because they usually treat the homophily as a global property that could be consistently satisfied everywhere in the whole graph. However, they ignore the factor that there may be different levels of homophily among different local regions in a graph. Note that the homophily (and heterophily) property could usually be quantitatively measured with an index Homophily Ratio (HR) \cite{zhu2020beyond} that is formally defined as an average of the consistency of labels between each connected node pair. According to our analysis, the Homophily Ratio of different parts of the graph may have a big variance, which indicates the different homophily levels (corresponding to neighborhood patterns related to homophily-heterophily properties) among different sub-graphs. 

In more detail, we order the nodes on a graph by Node-level Homophily Ratio (NHR), i.e., Homophily Ratio within a subgraph consisting of a given node and the edges connected the node, to analyze the characteristics of local sub-graphs with the different homophily-heterophily properties. Cora \cite{Planetoid} and Texas \cite{pei2019geom} are often used as representatives of homophily graphs and heterophily graphs in literature, respectively. As shown on the left of Fig. \ref{fig:observation}, in Texas, the homophily level varies greatly around different nodes, and many nodes are in a mixed state (e.g., NHR in range 20\% - 80\%) where we cannot easily determine whether a subgraph of a given node is homophily or heterophily. Even in the homophily graph Cora, there are about 30\% nodes in the mixed state. On the right of Fig. \ref{fig:observation}, we illustrate the accuracy of the node classification with the GCN model for nodes with different NHR. GCN obtains impressive performance for high NHR nodes while suffering a rapid decline in effectiveness for nodes in the mixed state on both graphs. The observation hints that we should adaptively model the nodes with different NHR and improve the effectiveness for the nodes in a mixed state.

In this paper, we theoretically analyze the reason for GCN's failure for the graph whose nodes have different homophily levels, and propose a novel GNN model, namely Gated Bi-Kernel Graph Neural Network (\modelname{}), according to our observations and the identified problem of GCN. When using a GCN model, the representations of nodes in a mixed state will be difficult to distinguish, because only the same kernel is used to transform the features of neighbors with different labels, and the followed aggregation operation will smooth the distinguishable feature of different labels. To avoid smoothing distinguishable features, we propose to use bi-kernels, one for homophily node pairs and the other for heterophily node pairs. 

In addition, we introduce a selection gate to determine whether a given node pair is homophily or heterophily, and then select the corresponding kernel to use. This selection gate is learnable with the training objective as minimizing the prediction cross entropy loss between the node pair prediction (i.e., homophily pair or heterophily pair) with the ground-truth, which could be obtained according to the node labels in the training data. Overall, we have two kernels and a selection gate to learn. They are co-trained together to optimize the total loss, which is the weighted summation of a task loss and a gate prediction loss.
Furthermore, we theoretically prove the distinguishability of \modelname{} is better than GCN.

In summary, the main contributions of our work are as follows:
\begin{itemize}
    \item We first propose to analyze the homophily-heterophily properties in a fine-grained manner, i.e., from a local perspective instead of a global perspective, to expose the uneven homophily levels among different local sub-graphs. Moreover, we theoretically analyze the reason for GCN's low performance on nodes in the local regions with low homophily levels.
    \item We propose a model  called \modelname{} to tackle the challenges by introducing the selection gate and the two kernels to model homophily and heterophily, respectively. It can theoretically improve the discriminative ability compared with GCN without introducing many parameters and hyper-parameters.
    \item Extensive experiments are conducted on real-world datasets. 
    Our \modelname{} outperforms most of the baselines on \textbf{seven} graphs significantly with various Homophily Ratios, and the largest error rate deduction is 32.89\%. 
    Moreover, the evaluation results verify the model can balance the performance on nodes with various homophily-heterophily properties. 
\end{itemize}

\section{Preliminaries and Definitions}
Let $\mathbf{G} = (\mathbf{V}, \mathbf{E}, \mathbf{X}, \mathbf{Y})$ denote a graph with node features and labels, where $\mathbf{V} = \{v_i|i=1,2,\ldots, n\}$ is the node set with $n$ nodes and $\mathbf{E} = \{(v_i, v_j) |\; v_i, v_j \in \mathbf{V}\; {\rm and} \; v_i, v_j \; {\rm is\; connected} \}  $ is the edge set. $\mathbf{X} \in \mathbb{R}^{n \times \iota}$ is the node feature matrix where $\iota$ is the feature dimension, and $\mathbf{x}_i$ is the feature of node $v_i$. $\mathbf{Y}$ is the node label set and the label $y_i$ of $v_i$ satisfies that $\forall \ v_i \in \mathbf{V}, y_i \in \mathbb{Z}^{+}_0$ and $y_i < k$
, and $k$ is the number of categories. $\mathbf{A} \in \{0, 1\}^{n \times n}$ denotes the adjacency matrix of the graph $\mathbf{G}$ representing the graph connectivity information, and it satisfies if $(v_i, v_j) \in \mathbf{E}$ then $\mathbf{A}[i, j] = 1$  and otherwise $\mathbf{A}[i, j] = 0$. The neighbor set of a center node $v_i$ is denoted as $\mathcal{N}(v_i) = \{v_j|\;(v_i, v_j) \in \mathbf{E}\}$.

In this paper, we mainly focus on the Node Classification task, which is one of the most critical machine learning tasks on a graph and is highly related to the homophily-heterophily properties of the graph. Node classification is a semi-supervised learning problem, and it is formally defined as follows:
\begin{defn}[\textbf{\emph{Node Classification}}]
\label{def:node_classification}
	Node classification is a task to learn a conditional probability $\mathbf{\mathcal{Q}}(\mathbf{Y}|\mathbf{G}; \Theta)$ to distinguish the category of each unlabeled node on a single graph $\mathbf{G} = (\mathbf{V}, \mathbf{E}, \mathbf{X}, \hat{\mathbf{ Y}})$, where $\Theta$ is the model parameters and $\hat{\mathbf{Y}}$ is the partial observed node label set.
\end{defn}

\subsection{Graph Neural Networks}
Graph Neural Networks (GNNs) can be viewed as an effective node encoder by leveraging the neighbor information. The process of node representation learning can be summarized as two steps: (1) feature transformation via a learnable kernel; (2) aggregation of transformed feature from neighbors. A general GNN architecture can be formulated as:
\begin{equation}
    \mathbf{z}_i^{\left (l \right)} = \gamma^{(l)} \left ( \mathbf{z}_i^{(l-1)}, \Gamma_{v_j\in\mathcal{N}(v_i)}\phi^{(l)}\left ( \mathbf{z}_i^{(l-1)}, \mathbf{z}_j^{(l-1)} \right)  \right ) 
\end{equation}
where $\Gamma$ denotes a differentiable, permutation invariant aggregation operation, e.g. mean, sum, or max. $\gamma$ and $\phi$ denote differentiable functions like Multilayer Perceptron (MLP). 
$\mathbf{z}_i^{(l)}$ is the output of node $v_i$ at the $l$-th layer.
After the GNN encoder, a linear fully connected layer is built above for classification. 

Graph Convolutional Network (GCN) \cite{kipf2016semi} is one of the most popular GNN models. In detail, at the $l$-th layer, GCN use the following formulation to transform the input:
\begin{equation}
\begin{split}
    \mathbf{Z}^{\left (l \right)} = \sigma(\hat{\mathbf{A}}\mathbf{Z}^{\left (l-1 \right)}\mathbf{W}^{l})\\
    {\rm where}\; \hat{\mathbf{A}} = \mathbf{D}^{-1}(\mathbf{A}+\mathbf{I}).
\end{split}
\end{equation}
$\mathbf{D}$ is a diagonal matrix and $D[i, i] = |\mathcal{N}(v_i)| + 1$ is the degree of the node $v_i$ plus one, and $\mathbf{I}$ is an identity matrix.

With the reasonable assumption of the conditional in-dependency, the target conditional distribution is actually equivalent to being decomposed, i.e., $\mathbf{\mathcal{Q}}(\mathbf{Y}|\mathbf{G}; \Theta) = \prod_{v_i \in \mathbf{V}} q(y_i|x_i, \mathcal{N}^{(l)}(v_i); \Theta)$ and $\mathcal{N}^{(l)}(v_i)$ is the neighbor set of the node $v_i$ within $l$-hops. In other words, only the node feature and the local neighbor information are utilized for classifying a certain node when using GCN to solve the node classification problem.

\subsection{Homophily vs. Heterophily in Graphs}
Homophily is one of the natural properties of graphs. A graph with good homophily property means that the connected node pairs tend to be more similar. To be specific for the node classification problem, it means the node pairs share the same label with high probability. In contrast, the connected node pairs are more dissimilar on a graph under heterophily. To quantitatively measure the homophily-heterophily properties, Homophily Ratio (HR) \cite{zhu2020beyond} is defined as follows:
\begin{defn}[\textbf{\emph{Homophily Ratio}}]
\label{def:homophily_ratio}
	The homophily ratio of a given graph $\ \mathbf{G} = (\mathbf{V}, \mathbf{E}, \mathbf{X}, \mathbf{Y})$ is defined as
	\begin{equation}
	    H(\mathbf{G}) = \frac{\sum_{(v_i, v_j) \in \mathbf{E}} \mathbbm{1}(y_i = y_j)}{|\mathbf{E}|},
	\end{equation}
	where $\mathbbm{1}(\cdot)$ is the indicator function (i.e., $\mathbbm{1}(\cdot) = 1$ if the condition $\cdot$ holds, otherwise $\mathbbm{1}(\cdot) = 0$), and $|\mathbf{E}|$ is the size of the edge set $\mathbf{E}$.
\end{defn}

HR can measure the global homophily property of the whole graph. In literature, when $H(\mathbf{G}) \to 1$ the graph $\mathbf{G}$ is considered to be strong homophily, while the graph $\mathbf{G}$ is considered to be strong heterophily when $H(\mathbf{G}) \to 0$. 

According to our analysis, the Homophily Ratio of different parts of the graph may have a big variance, which indicates the different homophily levels (corresponding to neighborhood patterns related to homophily-heterophily properties) among different sub-graphs. In addition, when using the GNN model to solve a node classification problem, only local neighbor information will be considered for classifying the center node. In order to study the homophily property from a fine-grained local perspective, it is reasonable to define a Node-level Homophily Ratio: 
\begin{defn}[\textbf{\emph{Node-level Homophily Ratio}}]
\label{def:node_homophily_ratio}
	Given a center node $v_i$ in a given graph $\mathbf{G} = (\mathbf{V}, \mathbf{E}, \mathbf{X}, \mathbf{Y})$, the
	node-level homophily ratio is defined as the fractions of edges that connect the node $v_i$ and another node $v_j$ with the same label. It is formally defined as:
	\begin{equation}
	    h_i = \frac{\sum_{v_j \in \mathcal{N}(v_i)} \mathbbm{1}(y_i = y_j)}{|\mathcal{N}(v_i)|},
	\end{equation}
	where $|\mathcal{N}(v_i)|$ is the size of the neighbor set $\mathcal{N}(v_i)$.
\end{defn}

\section{Generalization Bound of GCN with Regard to Homophily Ratio }
In this section, we theoretically analyze the generalization ability of Graph Convolutional Networks (GCNs) with regard to different homophily-heterophily levels and further theoretically explain the reason for GCN's failure in classifying the nodes with mixed homophily and heterophily neighbors. 

\begin{figure*}[!ht]
\includegraphics[width=1.0\textwidth]{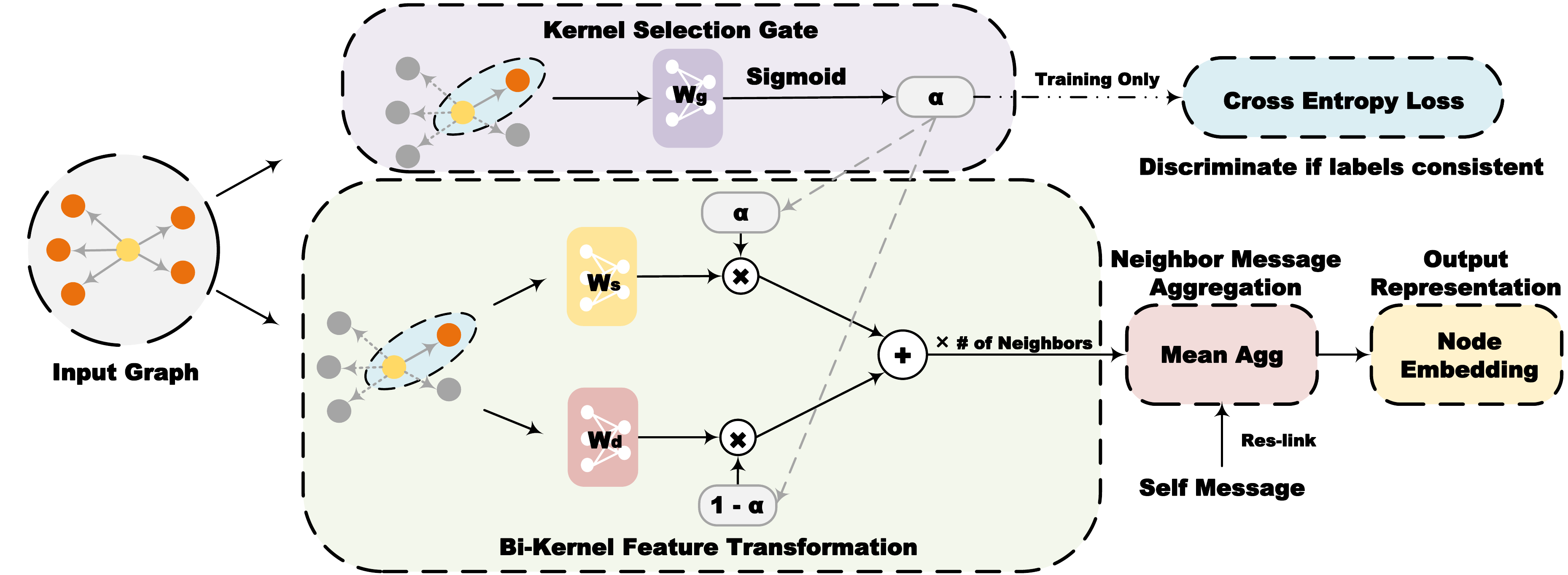}
\caption{An overview of \modelname{}. \modelname{} has two main modules: kernel selection gate learning and bi-kernel feature transformation. The learning process of the kernel selection gate receives input graph information and outputs selective signal $\alpha$ to discriminate if the neighbor node labels are consistent. The bi-kernel feature transformation trains \textbf{$W_s$} and \textbf{$W_d$}, namely weights to capture the similarity between nodes and weights to capture the dissimilarity between nodes. It uses the signal $\alpha$ from the former module to combine these two \textbf{$W$} in the process of message passing, then doing mean aggregation, finally producing node embedding. In the training phase, we have an additional cross-entropy loss to train the selection gate with supervisions.} 
\label{fig:overview}
\end{figure*}

\subsection{Complexity Measure Based View for Neural Networks Generalization}
In order to theoretically analyze the generalization ability of GCN so as to further explain the reason of GCN's failure for classifying nodes with mixed homophily and heterophily neighbors, the first step is to find a metric to measure the generalization ability of GCN with a given Homophily Ratio. Complexity Measure is the current mainstream method to measure the generalization ability of the model \cite{neyshabur2017exploring}. As we know, a model that can fit real data with lower capacity often has better generalization ability according to Occam's Razor principle. Complexity Measure is defined to measure the capacity of neural networks, so a \textbf{lower complexity measure} means a \textbf{better generalization ability}. Formally, a Complexity Measure is a measure function $\mathcal{M}:\{\mathcal{H}, \mathcal{S}\}\to \mathbb{R}^+$ where $\mathcal{H}$ is a class of models and $\mathcal{S}$ is a training set. In the following theoretical analysis, we can set $\mathcal{H}$ as GCNs with different parameters and $\mathcal{S}$ as graph data with a certain Homophily Ratio.

There are different forms of Complexity Measure $\mathcal{M}$ specifically designed from different aspects, such as sharpness of a local minimum \cite{dziugaite2017computing,keskar2016large} and the scale of norms of parameters \cite{neyshabur2015norm,neyshabur2015path}. We select Consistency of Representations \cite{natekar2020representation}\footnote{This work won the Winning Solution of the NeurIPS 2020 Competition on Predicting Generalization in Deep Learning.} as our Complexity Measure. This measure is designed based on Davies-Bouldin Index \cite{davies1979cluster}. Mathematically, for a given dataset and a given layer of a model,
\begin{align}
     S_i = \Big( \frac{1}{n_i} \sum^{n_i}_\tau | \mathcal{O}_i^{(\tau)} - \mu_{\mathcal{O}_i} |^p \Big)^{1/p} \; &{\rm for}\; i = 1 \cdots k \label{equ:intra-variance}\\
     M_{i, j} = || \mu_{\mathcal{O}_i} - \mu_{\mathcal{O}_j} ||_p \qquad &{\rm for}\; i,j = 1 \cdots k \label{equ:inter-variance},
\end{align}
where $i$ and $j$ are two different classes, $\mathcal{O}_i^{(\tau)}$ is the output representation of the $\tau$-th sample belonging to class $i$ for the given layer, $\mu_{\mathcal{O}_i}$ is the cluster centroid of the representations of class $i$, $S_i$ is a measure of scatter within representations of class $i$, and $M_{i, j}$ is a measure of separation between representations of classes $i$ and $j$. Then, the complexity measure based on the Davies Bouldin Index:
\begin{equation}
\label{equ:complexity_measure}
    \mathcal{C} = \frac{1}{k} \sum_{i = 0}^{k-1} \max_{i \neq j} \frac{S_i + S_j}{M_{i, j}}.
\end{equation}
When $p$ is set as 2. This complexity measure can be roughly interpreted as the ratio of the intra-class variance to the inter-class variance.

\subsection{Generalization Bound Analysis and Failure Reason of GCN}
We use the complexity measure of Davies-Bouldin Index with $p = 2$ (refer to \Eq{}\eqref{equ:intra-variance} to \eqref{equ:complexity_measure}) to analyze the generalization ability of GCNs. We propose several assumptions to simplify the analysis: (1) a binary classification problem is considered (i.e., $k=2$); (2) given the category $y_i$ of the center node $v_i$, the probability of its neighbors belonging to the same category is $P_{y_i}$, e.g., $P_0$ means for a center node belonging to the 0-th class, the probability of its neighbors belonging to the same category; (3) The number of neighbors is $d$ for all the nodes in the graph; (4) The graph has been added self-loop to facilitate GCN's calculation; (5) We ignore the non-linear activation in GCNs for brevity.
Then we have the following theorem:
\begin{theorem}
\label{theorem:1}
For a binary node classification problem and the input graph satisfies the assumptions above, if $|P_0 + P_1 - 1| \to 0$ then the complexity measure of Consistency of Representations will converge to \textbf{$+\infty$} for GCNs with an arbitrary kernel (except for a zero matrix). In that case, GCN will lose the generalization ability.
\end{theorem}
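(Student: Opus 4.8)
The plan is to evaluate the complexity measure in Eq.~\eqref{equ:complexity_measure} directly. With $k=2$ and $M_{0,1}=M_{1,0}$ it collapses to the single ratio $\mathcal{C}=\frac{S_0+S_1}{M_{0,1}}$, so the whole argument reduces to showing that the inter-class separation $M_{0,1}$ (Eq.~\eqref{equ:inter-variance}) tends to $0$ while the intra-class scatter $S_0+S_1$ (Eq.~\eqref{equ:intra-variance}) stays bounded away from $0$. To make the centroids computable I would first fix a class-conditional feature model: a node of class $c$ has feature $\mathbf{x}=\mu_c+\varepsilon$ with $\mathbb{E}[\varepsilon]=0$, a fixed positive noise level $\mathbb{E}\|\varepsilon\|_2^2=\sigma^2>0$, and $\mu_0\neq\mu_1$. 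By Assumption~(5) one GCN layer acts linearly, so under $\hat{\mathbf{A}}=\mathbf{D}^{-1}(\mathbf{A}+\mathbf{I})$ the representation is $\mathbf{z}_i=\frac{1}{d+1}\big(\mathbf{x}_i+\sum_{v_j\in\mathcal{N}(v_i)}\mathbf{x}_j\big)\mathbf{W}$, and the centroid $\mu_{\mathcal{O}_c}$ is just the expectation of $\mathbf{z}_i$ over class-$c$ centers.

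Next I would compute the two centroids using Assumption~(2). Conditioned on a class-$0$ center, a neighbor is class $0$ with probability $P_0$ and class $1$ with probability $1-P_0$, and symmetrically for class $1$. Taking expectations yields $\mu_{\mathcal{O}_0}=\frac{1}{d+1}\big((1+dP_0)\mu_0+d(1-P_0)\mu_1\big)\mathbf{W}$ and $\mu_{\mathcal{O}_1}=\frac{1}{d+1}\big((1+dP_1)\mu_1+d(1-P_1)\mu_0\big)\mathbf{W}$. Subtracting, the $\mu_0$- and $\mu_1$-coefficients are exactly opposite, and the class-discriminative part of the difference is proportional to $(P_0+P_1-1)\,(\mu_0-\mu_1)\mathbf{W}$: the neighbor-aggregated signal is carried entirely by the factor $P_0+P_1-1$, which is the algebraic statement that the two class-conditional neighbor distributions coincide exactly when $P_0+P_1=1$. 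Hence for any nonzero kernel $\mathbf{W}$ and $\mu_0\neq\mu_1$, this component of $M_{0,1}$ vanishes as $|P_0+P_1-1|\to0$.

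Finally I would lower-bound $S_0+S_1$ so that the limit is genuinely $+\infty$ rather than an indeterminate $0/0$. Even as the centroids merge, each representation fluctuates for two independent reasons: the $d+1$ feature noises $\varepsilon$ average to a variance of order $\frac{\sigma^2}{d+1}\|\mathbf{W}\|^2$, and the random same-class neighbor count, distributed as $\mathrm{Binomial}(d,P_c)$, injects an additional variance of order $\frac{d\,P_c(1-P_c)}{(d+1)^2}\|(\mu_0-\mu_1)\mathbf{W}\|^2$ along the $\mu_0-\mu_1$ direction. The noise term keeps $S_0+S_1$ bounded below by a positive constant uniformly over all sequences with $P_0+P_1\to1$; combined with $M_{0,1}\to0$ this yields $\mathcal{C}=\frac{S_0+S_1}{M_{0,1}}\to+\infty$. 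I expect the main obstacle to be making $M_{0,1}\to0$ airtight in the presence of the self-loop: the $+\mathbf{I}$ term leaves a residual $\frac{1}{d+1}(\mu_0-\mu_1)\mathbf{W}$ in the centroid gap, so the crux is to argue that, once the neighbor-mixing signal $P_0+P_1-1$ is switched off, this residual does not keep the clusters separated, and to control degenerate sequences such as $P_0\to1,\,P_1\to0$ where the binomial scatter also collapses and only the feature-noise variance averts a $0/0$ indeterminacy.
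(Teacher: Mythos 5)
Your overall strategy is the same as the paper's: reduce Eq.~\eqref{equ:complexity_measure} to $\mathcal{C}=\frac{S_0+S_1}{M_{0,1}}$, show the centroid gap $M_{0,1}$ is controlled by the factor $P_0+P_1-1$, and bound the intra-class scatter away from zero. Your scatter bound (feature noise of order $\frac{\sigma^2}{d+1}\|\mathbf{W}\|^2$ plus a binomial term) plays the same role as the paper's inequality $P_0^2\sigma_0^2+(1-P_0)^2\sigma_1^2\geq\frac{\sigma_0^2\sigma_1^2}{\sigma_0^2+\sigma_1^2}$, and either version adequately rules out the $0/0$ degeneracy you worry about at the end (e.g.\ $P_0\to1$, $P_1\to0$).

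However, there is a genuine gap, and you have correctly located it yourself: the self-loop. With the aggregation $\frac{1}{d+1}\bigl(\mathbf{x}_i+\sum_{v_j\in\mathcal{N}(v_i)}\mathbf{x}_j\bigr)$ that assumption (4) and the definition $\hat{\mathbf{A}}=\mathbf{D}^{-1}(\mathbf{A}+\mathbf{I})$ dictate, your own computation gives $\mu_{\mathcal{O}_0}-\mu_{\mathcal{O}_1}=\frac{1+d(P_0+P_1-1)}{d+1}(\mu_0-\mu_1)\mathbf{W}$, so as $|P_0+P_1-1|\to0$ the gap converges to the nonzero residual $\frac{1}{d+1}(\mu_0-\mu_1)\mathbf{W}$ rather than to $0$; consequently $\mathcal{C}$ stays bounded and the claimed divergence does not follow. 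This residual cannot be argued away --- it genuinely keeps the clusters separated in your model --- so the proof as proposed cannot be completed. The paper avoids the issue by silently dropping the self-loop: its proof takes $\mathcal{O}_0^{(i)}=\mathbf{W}\sum_{j\in\mathcal{N}(v_i)}\frac{1}{d}\mathbf{X}^{(j)}$, a pure mean over the (random-label) neighbors only, under which $M_{0,1}=|P_0+P_1-1|\cdot\|\mathbf{W}(\mu_{\mathbf{X}_0}-\mu_{\mathbf{X}_1})\|\to0$ exactly. To finish your argument you would have to adopt the same modeling choice (no self-contribution, or equivalently treat the center's own feature as carrying no deterministic class information), and you should state that explicitly, since it is in tension with assumption (4) as written. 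Your version is in this sense the more faithful reading of the assumptions, and it exposes that the theorem's conclusion depends on excluding the ego feature from the aggregation.
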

\begin{proof}
Since $k=2$ and $p=2$, \Eq{}\eqref{equ:complexity_measure} can be simplified as:
    $\mathcal{C} = \frac{S_0 + S_1}{M_{0, 1}}$.
We first calculate $\mu_{\mathcal{O}_i}$:
\begin{equation}
\begin{split}
    \mu_{\mathcal{O}_0} &= \mathbb{E}[\mathcal{O}_0^{(i)}] =  \mathbb{E}[\mathbf{W} \sum_{j \in \mathcal{N}(v_i)}\frac{1}{d}\mathbf{X}^{(j)}]\\
    &= \mathbf{W} (P_0 \cdot \mu_{\mathbf{X}_0} + (1 - P_0) \cdot \mu_{\mathbf{X}_1}),
\end{split}
\end{equation}
where $\mathbf{X}^{(j)}$ is the $j$-th node feature and $\mu_{\mathbf{X}_i}$ is the cluster centroid of the node features of class $i$.
Similarly, we have:
\begin{equation}
\mu_{\mathcal{O}_1} = \mathbf{W} (P_1 \cdot \mu_{\mathbf{X}_1} + (1 - P_1) \cdot \mu_{\mathbf{X}_0}).
\end{equation}
Then, $M_{0, 1}$ can be obtained by:
\begin{equation}
\label{equ:inter-class_variance}
\begin{split}
M_{0, 1} &= ||\mu_{\mathcal{O}_0} - \mu_{\mathcal{O}_1}||\\
&= ||\mathbf{W} (P_0 \cdot \mu_{\mathbf{X}_0} + (1 - P_0) \cdot \mu_{\mathbf{X}_1} - P_1 \cdot \mu_{\mathbf{X}_1} - (1 - P_1) \cdot \mu_{\mathbf{X}_0})||\\
&= (P_0 + P_1 - 1) \cdot ||\mathbf{W} (\mu_{\mathbf{X}_0} - \mu_{\mathbf{X}_1})||.
\end{split}
\end{equation}
Then $S_0^2$ is calculated by:
\begin{equation}
    \begin{split}
        S_0^2 &= \mathbb{E} [||\mathcal{O}_0^{(i)} - \mu_{\mathcal{O}_0}||^2] = \mathbb{E} [<\mathcal{O}_0^{(\tau)} - \mu_{\mathcal{O}_0},\; \mathcal{O}_0^{(\tau)} - \mu_{\mathcal{O}_0}>]\\
        &= P_0^2 \mathbb{E}\left[||\mathbf{W}(\mathbf{X}_0^{(i)} - \mu_{\mathbf{X}_0})||^2\right] + (1-P_0)^2\mathbb{E}\left[||\mathbf{W}(\mathbf{X}_1^{(i)} - \mu_{\mathbf{X}_1})||^2\right],
    \end{split}
\end{equation}
where $<\cdot,\cdot>$ is inner production. Let $\sigma_0^2 = \mathbb{E}\left[||\mathbf{W}(\mathbf{X}_0^{(i)} - \mu_{\mathbf{X}_0})||^2\right]$ and $\sigma_1^2 = \mathbb{E}\left[||\mathbf{W}(\mathbf{X}_1^{(i)} - \mu_{\mathbf{X}_1})||^2\right]$, then the above function can be rewritten as:
\begin{equation}
\begin{split}
    S_0^2 &= P_0^2 \sigma^2_0 + (1 - P_0)^2 \sigma^2_1
    \geq \frac{\sigma_0^2\sigma_1^2}{\sigma_0^2 + \sigma_1^2}.
\end{split}
\end{equation}
Similarly, we have:
\begin{equation}
\begin{split}
    S_1^2 = P_1^2 \sigma^2_1 + (1 - P_1)^2 \sigma^2_0
    \geq  \frac{\sigma^2_0\sigma_1^2}{\sigma_0^2 + \sigma_1^2}.
\end{split}
\end{equation}
Then we calculate the complexity measure $\mathcal{C}$:
\begin{equation}
\begin{split}
\mathcal{C} &= \frac{\sqrt{S_0^2 + S_1^2 + 2S_0 \cdot S_1}}{M_{0, 1}} \geq \frac{2\sigma_0\sigma_1}{\sqrt{\sigma_0^2 + \sigma_1^2} \cdot M_{0, 1}}\\
&= \frac{2\sigma_0\sigma_1}{|(P_0 + P_1 - 1)| \cdot \sqrt{\sigma_0^2 + \sigma_1^2} \cdot ||\mathbf{W} (\mu_{\mathbf{X}_0} - \mu_{\mathbf{X}_1})||} 
\end{split}
\end{equation}
Thus, we obtain a lower bound of $\mathcal{C}$. Notice that $\sigma_0$ and $\sigma_1$ could not be zero, otherwise the classification problem is meaningless. If $|P_0 + P_1 - 1| \to 0$, then the lower bound of $\mathcal{C} \to +\infty$, and hence $\mathcal{C} \to +\infty$. The model will lose the generalization ability. 
\end{proof}

\paragraph{Intuition of Theorem. \ref{theorem:1}} Theorem. \ref{theorem:1} tells us that if there are a similar number of homophily neighbors and heterophily neighbors for graph nodes, then GCNs will smooth the outputs from different classes and loss the discrimination ability even though the initial node features are quite distinguishable. Such inter-class smoothness is also a reason why GCNs cannot performance better than a naive Multilayer Perceptron (MLP) for the graph whose nodes have different homophily levels. If we look into the proof, we can find that the distance between representations of 0-th class and 1-th class $M_{i,j}$ tends to be 0, which makes the complexity measure $\mathcal{C}$ tend to be infinity. Thus, to avoid $M_{i,j}$ being zero inspires us for our proposed new model Gated Bi-Kernel Graph Neural Networks.

\section{Gated Bi-Kernel Graph Neural Networks}

According to the identified problem in GCNs, we propose a new model pertinently, namely Gated Bi-Kernel Graph Neural Networks (GBK-GNN). The overview of GBK-GNN is shown in \Fig{} \ref{fig:overview}. There are two main differences compared with vanilla GNNs, i.e., Bi-kernel Feature Transformation and Kernel Selection Gate modules.
\subsection{Bi-Kernel Design}
As mentioned above, the GCN's problem of lack of distinguishability is mainly caused by a small inter-class variance $M_{i, j}$. A single kernel of GCN cannot adaptively adjust the weights for different types of nodes according to different homophily properties. Mathematically, a single kernel GCN cannot change the proportion of $P_0$ and $P_1$ in \Eq{}\eqref{equ:inter-variance}. Thus, we design a Bi-Kernel feature transformation method to tackle the problem. 

In detail, there are two kernels in our model, one for homophily node pairs and the other for heterophily pairs. In an ideal case where we can exactly determine whether a pair is homophily or not, the inter-class variance $M_{0,1}$ will be changed to:
\begin{equation}
\label{equ:inter_variance_new}
    ||(P_1\mathbf{W}_s + (P_0-1)\mathbf{W}_d)(\mu_{\mathbf{X}_0} - \mu_{\mathbf{X}_1}) + (P_0 - P_1)(\mathbf{W}_s - \mathbf{W}_d)\mu_{\mathbf{X}_0}||,
\end{equation}
where $\mathbf{W}_s$ is the kernel for homophily edges and $\mathbf{W}_d$
is the kernel for heterophily edges. The first term in Eq. \eqref{equ:inter_variance_new} shows that $\mathbf{W}_s$ and $\mathbf{W}_d$ can adjust the relation of $P_1$ and $P_0-1$ to avoid appearance of $|P_1 + P_0 - 1|$ term. Besides, the second term can support extra distinguishability even if the original features lack discrimination, i.e., $||\mu_{\mathbf{X}_1} - \mu_{\mathbf{X}_0}|| \to 0$. 

Compared with Graph Attention Network (GAT), our design can better model the positive and negative correlations by setting one kernel as a positive definite matrix and the other kernel as a negative definite matrix. However, GAT cannot simultaneously model positive and negative correlations because the weights calculated by attention are always positive.

\subsection{Selection Gate Mechanism}
In reality, we cannot directly determine whether a node pair belongs to the same class or not. 
Thus, we introduce a learnable kernel selection gate to discriminate node pairs and adaptively select kernels. 
The formal form of the transformation of the input is listed as follows:
\begin{equation}
\begin{split}
    \mathbf{z}_i^{(l)} = \sigma \Big( \mathbf{W_f}\mathbf{z}_i^{(l-1)} &+ \frac{1}{N}\sum_{v_j \in \mathcal{N}(v_i)} \!\!\!\!\! \alpha_{ij} \mathbf{W_s}\mathbf{z}_j^{(l-1)} + (1 - \alpha_{ij})\mathbf{W_d}\mathbf{z}_j^{(l-1)} \Big) \\
    \alpha_{ij} &= {\rm Sigmoid}\left(\mathcal{G}_l
    (\mathbf{z}_i^{(l-1)}, \mathbf{z}_j^{(l-1)}; \; \mathbf{W_g})\right),
\end{split}
\end{equation}
where $\mathcal{G}_l(\cdot, \cdot)$ is a learnable function ($\mathbb{R}^{n_{l-1}} \times \mathbb{R}^{n_{l-1}} \to \mathbb{R}$) in the $l$-th layer, $\mathbf{W_s}$, $\mathbf{W_f}$, $\mathbf{W_d}$ and $\mathbf{W_g}$ are learnable parameters, $\mathbf{z}_i^{(l)}$ is the hidden representation of node $v_i$ in the $l$-th layer, $\alpha_{ij}$ is the gate signal, and $n_l$ is the dimension number of the node hidden representation in $l$-th layer. $\mathcal{G}_l(\cdot, \cdot)$ can be a multilayer perceptron or a graph neural network (e.g. GraphSage \cite{DBLP:journals/corr/HamiltonYL17}).
 
\subsection{Loss Design and Optimization}
Similar to vanilla classification methods, a cross-entropy loss $\mathcal{L}_o$ is used for the node classification problem. The difference is that we utilize an additional cross-entropy loss $\mathcal{L}_g^{(l)}$ for each layer $l$ to guide the training procedure of the selection gate. Thus, the final objective is a combination of two kinds of losses:
\begin{equation}
    \mathcal{L} = \mathcal{L}_o + \lambda \sum_l^L \mathcal{L}_g^{(l)}
\end{equation}
where $\lambda$ is a hyper-parameter to balance two losses. We use AdamW \cite{loshchilov2017decoupled} to optimize the loss $\mathcal{L}$.

\section{Experiments}

In this section, we evaluate the empirical performance of \modelname{} on real-world homophily and heterophily graph datasets on the node classification task and compare with other state-of-the-art models.
\subsection{Datasets}
We perform our experiments on seven datasets from PyTorch-Geometric, which are commonly used in graph neural network literature. An introduction of the datasets is presented in Table \ref{tab:datasets}. Homophily ratio \cite{zhu2020beyond} represents the fraction of edges that connect two nodes of the same label. If the ratio is close to 1, that means the dataset is homophily. And if the ratio is close to 0, that means the dataset is heterophily. 

\textbf{Cora, CiteSeer and PubMed} \cite{DBLP:journals/corr/YangCS16} are citation networks based datasets. In these datasets, nodes represent papers, and edges represent citations of one paper by another. Node features are the bag-of-words representation of papers, and node label denotes the academic topic of a paper. Since these citation datasets have a high homophily ratio, they are considered homophily datasets. 

\textbf{Wisconsin, Cornell and Texas} are subdatasets of WebKB collected by Carnegie Mellon University. In these datasets, nodes represent web pages and edges represent hyperlinks between them. Node features are the bag-of-words representation of web pages. The task is to classify the nodes into one of the five categories, i.e., student, project, course, staff, and faculty. 

\textbf{Actor} \cite{pei2019geom} is the actor-only induced subgraph of the film-director-actor-writer network. Each node corresponds to an actor, and the edge between two nodes denotes co-occurrence on the same Wikipedia page. Node features correspond to some keywords in the Wikipedia pages. The task is to classify the nodes into five categories in terms of words of actor’s Wikipedia.

These two kinds of datasets has a relatively low homophily ratio and are considered as heterophily datasets. 
\begin{table}[h]
\centering
\caption{Statistics of the node classification datasets.}
\begin{tabular}{lccccc}
\hline
Dataset & Hom. Ratio & Nodes  & Edges    & Features & Classes\\ \hline
Cora     &  0.819    & 2,708  & 10,556    & 1,433 & 7\\ 
CiteSeer &  0.703    & 3,327   & 9,104     & 3,703 &  6\\ 
PubMed   &  0.791    & 19,717  & 88,648    & 500 &   3\\ 
Cornell  &  0.207    & 183    & 298      & 1,703 &  5\\ 
Texas    &  0.114    & 183    & 325      & 1,703 &  5\\  
Wisconsin&  0.139    & 251    & 515      & 1,703 &  5\\  
Actor    &  0.201    & 7,600   & 30,019    & 932 &   5\\  \hline

\end{tabular}
\label{tab:datasets}
\end{table}

\begin{table*}[h]
\centering
\caption{Mean classification accuracy of our \modelname{} and other popular GNN models on both homophily and heterophily graph datasets. The best performing method is highlighted.}
\renewcommand\tabcolsep{2.0pt}
\begin{tabular}{lccccccccc}
\hline
Model   & Cora                       & CiteSeer & PubMed   & Cornell & Texas & Wisconsin &  Actor\\ \hline
 DNN  &74.64 $\pm$ 0.63              &71.30 $\pm$ 1.94 &87.87 $\pm$ 0.32 & 67.74 $\pm$ 3.68  & 74.19 $\pm$ 5.22  &71.74 $\pm$ 6.93  &35.41 $\pm$ 0.93 \\
 GCN  & 87.22 $\pm$ 0.76 &74.67 $\pm$ 1.33 &87.40 $\pm$ 0.18 &35.48 $\pm$ 4.60  & 48.39 $\pm$ 4.15  &50.00 $\pm$ 5.58 &27.67 $\pm$ 1.33 \\
 GAT  &84.31 $\pm$ 1.21              &71.89 $\pm$ 1.40 &86.74 $\pm$ 0.08 &45.16 $\pm$ 5.56  & 41.94 $\pm$ 5.33  &63.04 $\pm$ 5.33 &26.56 $\pm$ 0.98 \\
 GIN  &81.93 $\pm$ 0.57              &68.08 $\pm$ 0.57 &85.76 $\pm$ 0.29 &38.71 $\pm$ 5.16  &29.03 $\pm$ 4.42  &45.65 $\pm$ 4.71  &23.61 $\pm$ 0.76 \\
GCNII  & 87.22 $\pm$ 0.98 &75.11 $\pm$ 1.18 &86.19 $\pm$ 0.57 &45.16 $\pm$ 4.87  & 51.61 $\pm$ 5.47  &58.70 $\pm$ 3.25  &28.72 $\pm$ 0.60 \\
GraphSage  & \underline{87.77 $\pm$ 1.83}         & \underline{76.72 $\pm$ 0.64} &87.06 $\pm$ 0.74 &\underline{70.97 $\pm$ 4.76} &70.97 $\pm$ 4.46  &73.91 $\pm$ 6.79 &34.10 $\pm$ 0.65 \\
\hline
CPGNN      &79.40 $\pm$ 1.39         & 69.41 $\pm$ 0.45 &77.40 $\pm$ 0.57 & 70.27 $\pm$ 5.12   & \underline{75.68 $\pm$ 5.12} & \underline{76.47 $\pm$ 6.16} &  \underline{35.59 $\pm$ 0.86} \\
H2GCN      &82.70 $\pm$ 0.87         & 43.70 $\pm$ 0.24  &78.60 $\pm$ 0.62  & 70.27 $\pm$ 4.99   & 72.97 $\pm$ 4.53 & 70.59 $\pm$ 5.76  &  35.39 $\pm$ 1.34 \\

GEOM-GCN   & 84.10 $\pm$ 1.12        &76.28 $\pm$ 2.06   &\underline{88.13 $\pm$ 0.67} & 54.05 $\pm$ 3.87 & 67.57 $\pm$ 5.35 & 68.63 $\pm$ 4.92 & 30.00 $\pm$ 1.26 \\  \hline

\modelname{}& \textbf{88.69 $\pm$ 0.42}&\textbf{79.18 $\pm$ 0.96}  &\textbf{89.11 $\pm$ 0.23} &\textbf{74.27 $\pm$ 2.18}  &\textbf{81.08 $\pm$ 4.88}  &\textbf{84.21 $\pm$ 4.33}    &\textbf{38.97 $\pm$ 0.97}  \\ \hline
Error Reduction    & \textbf{7.52\%}  &\textbf{10.57\%}  &\textbf{8.26\%}  &\textbf{11.37\%}  &\textbf{22.20\%}  &\textbf{32.89\%}     &\textbf{5.25\%}  \\ \hline
\end{tabular}
\label{tab:baseline}
\end{table*}

\subsection{Baselines}
We compare our \modelname{} with three types of state-of-the-art methods for hetermophily graphs and six graph neural network based methods. 
\begin{itemize}
    \item \textbf{CPGNN} \cite{CPGNN} incorporates into GNNs a compatibility matrix that captures both heterophily and homophily by modeling the likelihood of connections between nodes in different classes.
    \item \textbf{H2GCN} \cite{zhu2020beyond} is a graph neural network combines ego and neighbor-embedding separation, higher-order neighborhoods, and combination of intermediate representations, which performs better on heterophily graph data.
    \item \textbf{GEOM-GCN} \cite{pei2019geom} is a graph convolutional network with a geometric aggregation scheme which operates in both graph and latent space.
    \item \textbf{DNN} is a basic fully connected neural network model.
    \item \textbf{GCN} \cite{kipf2016semi} is a semi-supervised graph convolutional network model which learns node representations by aggregating information from neighbors.
    \item \textbf{GAT} \cite{velivckovic2017graph} is a graph neural network model using attention mechanism to aggregate node features.
    \item \textbf{GIN} \cite{gin} is a graph neural network which can distinguish graph structures more as powerful as the Weisfeiler-Lehman test.
    \item \textbf{GCNII} \cite{DBLP:journals/corr/abs-2007-02133} is an extension of graph convolutional network with initial residual and identity mapping which can relieves the problem of over-smoothing.
    \item \textbf{GraphSage} \cite{DBLP:journals/corr/HamiltonYL17} is a general inductive framework that leverages node feature information to efficiently generate node embeddings for previously unseen data
    
\end{itemize}

\subsection{Experimental Setup}
We use a public data split provided by \cite{pei2019geom}, the nodes of each class are 60\%, 20\%, and 20\% for training, validation and testing. 
 
We use Adam optimizer and fix the hidden unit to be 16, running 500 epochs for each experiment. For other hyper-parameters, we perform a hyper-parameter search for all models. For fairness, the size of the search space for each model is the same. The searching hyper-parameters include learning rate, weight decay, and $\lambda$ in our regularization term. The hyper-parameter search process ensures each model has the same search space. Learning rate is searched in $\{1e-3, 1e-4, 1e-5\}$, weight decay is searched in $\{1e-2, 1e-3, 1e-4\}$, and $\lambda$ is searched in $(0, 64]$. The experimental setup for \modelname{} and all other baselines are the same except for the layer number of GCNII. We use 64 layers GCNII since it is designed for deep models, and all other models are set to be two layers. We run each experiment on three random seeds and make an average. Model training is done on Nvidia Tesla V100 GPU with 32GB memory. In more detail, 
our code depends on cudatoolkit 10.2, pytorch 1.7.0 and torch-geometric 1.7.2.

\subsection{Results}
In this section, we present the results of our experiments and answer four research questions. 

\subsubsection{RQ1. Does \modelname{} outperform SOTA methods on both homophily and heterophily datasets?} 

Table \ref{tab:baseline} shows the comparison of the mean classification accuracy of \modelname{} with other popular GNN models, including three types of recent state-of-the-art methods and six graph neural network based methods. In general, \modelname{} achieves state-of-the-art performance on all of the seven datasets. On homophily datasets, the relative error rate reduction compared to the second place models on Cora, CiteSeer, and PubMed are 7.52\%, 10.57\%, and 8.26\%, respectively. This illustrates that we do better on homophily graphs than other state-of-the-art models. On the other hand, we also have a good performance on heterophily datasets. Compared to the second place models, the relative error rate reduction on Wisconsin, Cornell and Texas are significant and can reach  32.89\%, 11.37\% and 22.20\%, respectively. On Actor, we have the relative error rate reduction 5.25\%. We notice that DNN and GraphSage have a relatively good performance on heterophily graphs compared with other based GNN models. This phenomenon may be caused by their emphasis on capturing the information of the central node. Since nodes in heterophily graphs have relatively weak connections with neighbors, models like GCN, GAT or GIN which focus more on the neighbors may be more easily confused. CPGNN and H2GCN have good performance on many heterophily datasets because they are specially designed for heterophily graphs. However, we still have better performance on heterophily datasets than them and maintain a much better performance on homophily datasets. Besides, our gate mechanism can also be applied to these methods to pursue a higher accuracy since the main idea of the gate mechanism is generic.

\subsubsection{RQ2. How does \modelname{} perform on different splits of datasets?}

We perform experiments on six different splits in which the training set contains 10\% to 60\% nodes on four datasets. Table \ref{tab:homo_split} and \ref{tab:heter_split} show that our model has the best performance on the most of the splits. On the homophily datasets (i.e., Cora and CiteSeer) GCNII achieves similar performance to our methods. It is partially caused by the depth of GCNII being much greater than ours (64 v.s. 2). On the heterophily datasets (i.e., Cornell and Texas), our method significantly outperforms the baselines on different splits. The results show that we can still have a decent gain when the proportion of the training set is smaller, which means that we take good advantage of labels even though they are few.

\begin{table*}[!h]
	\centering
	\caption{Evaluation of our \modelname{} and other popular GNN models on different split of Cora and CiteSeer.}
	\begin{tabular}{ccccccccccccc} 
	\hline
	\multirow{2}{*}{Model} &
	\multicolumn{6}{c}{Cora} &
	\multicolumn{6}{c}{CiteSeer} \\
    \cmidrule(r){2-7} \cmidrule(r){8-13} 
	&
	10\% & 20\% & 30\% & 40\% & 50\% & 60\% &
	10\% & 20\% & 30\% & 40\% & 50\% & 60\%
	\\
	\cmidrule(r){1-7} \cmidrule(r){8-13} 
	
	GraphSage &
	80.55  &  84.72  &  84.19  &  86.18  &  87.32  &  87.77&
	67.60  &  \underline{71.88}  &  70.20  &  \underline{73.89}  &  \underline{74.53}   & \underline{76.72}
	\\ 
	GCNII &
	\underline{83.65}  &  \textbf{84.75}  &  \underline{86.24}  &  \underline{87.58}  &  \underline{87.78}  & \underline{87.22} &
	69.39  &  71.42  &  70.29  &  71.81  &  72.63  & 75.11
	\\
	GIN &
	61.61  &  76.30  &  76.49  &  79.81  &  80.84  &  81.93 &
	54.43  &  62.61  &  63.97  &  64.59  &  66.23  & 68.08
	\\
	GAT &
	80.39  &  79.91  &  80.29  &  81.61  &  84.01  &  84.31 &
	68.84  &  68.89  &  67.85  &  70.52  &  72.27  & 71.89
	\\
	GCN &
	82.06  &  83.48  &  85.93  &  87.10  &  87.11  &  \underline{87.22} &
	\underline{69.53}  &  71.65  &  \underline{70.54}  &  71.71  &  72.27  &  74.67
	\\
	\modelname{} & 
	\textbf{83.68} & \underline{84.68} & \textbf{86.27} & \textbf{87.99} & \textbf{88.83} & \textbf{88.69} & 
	\textbf{69.61}  &  \textbf{72.87}  &  \textbf{72.65}  &  \textbf{74.69}  &  \textbf{75.08}  &  \textbf{79.18}
 	\\
	\hline
	\end{tabular}
	  \label{tab:homo_split}
\end{table*}

\begin{table*}[!h]

	\centering
	\caption{Evaluation of our \modelname{} and other popular GNN models on different split of Cornell and Texas.}
	\begin{tabular}{ccccccccccccc}
	\hline
	\multirow{2}{*}{Model} &
	\multicolumn{6}{c}{Cornell} &
	\multicolumn{6}{c}{Texas} \\
	\cmidrule(r){2-7} \cmidrule(r){8-13} 
	&
	10\% & 20\% & 30\% & 40\% & 50\% & 60\% &
	10\% & 20\% & 30\% & 40\% & 50\% & 60\%
	\\
	\cmidrule(r){1-7} \cmidrule(r){8-13} 
	
	GraphSage &
	\underline{59.49}  &  \underline{65.78}  &  \underline{72.13}  &  \underline{73.68}  &  \underline{75.00}  & \underline{70.97} &
	\underline{62.03}  &  \underline{69.91}  &  \underline{70.03}  &  \underline{70.89}  &  \underline{70.83}  & \underline{70.97}
	\\ 
	GCNII &
	58.23  &  63.16  &  62.69  &  52.63  &  52.08  & 45.16&
	51.90  &  56.58  &  62.69  &  56.14  &  54.17  & 51.61
	\\
	GIN &
	46.84  &  56.58  &  52.24  &  52.63  &  43.75  & 38.71&
	26.58  &  26.32  &  40.30  &  38.6  &  43.75  & 29.03
	\\
	GAT &
	46.84  &  65.79  &  58.21  &  52.63  &  52.08  & 45.16&
	18.99  &  59.21  &  56.72  &  52.63  &  52.08  & 41.94
	\\
	GCN &
	36.71  &  57.89  &  61.19  &  59.65  &  52.08  & 35.48&
	51.90  &  56.58  &  58.21  &  47.37  &  47.92  & 48.39
	\\
	\modelname{} & 
	\textbf{63.95}  &  \textbf{67.90}  &  \textbf{72.41}  &  \textbf{79.31}  &  \textbf{83.33}  & \textbf{74.27}& 
	\textbf{62.79}  &  \textbf{70.49}  & \textbf{77.64}  &  \textbf{75.51}  &  \textbf{83.33}  & \textbf{81.08}
	\\
	\hline
	\end{tabular}
	  \label{tab:heter_split}
\end{table*}

\subsubsection{RQ3. How does \modelname{} perform on the homophily nodes and heterophily nodes?}

Table \ref{tab:nodes} illustrates the test accuracy of \modelname{} on the nodes range by homophily ratio on different datasets. We rank the nodes in the whole dataset by their homophily ratio and split them into five separations. Homophily of nodes can be seen as the probability that one node has the same label as its neighbor node. As shown in the table, both the nodes have a low homophily rate and high homophily rate have good performance. We suppose our bi-kernel feature transformation contributes to this phenomenon. 
If a node has a high homophily rate, $W_s$ will contribute more to the performance. Otherwise,  $W_d$ will have more contribution.
Since the test dataset may have a distribution shift, there may be no nodes in one proportion of nodes, especially on a small dataset, like Texas which have only 183 and 251 nodes. The test accuracy of \modelname{} may have some drops on some certain proportion, for example, like 0 \textasciitilde 20 homophily ratio proportion on Cora. That is because the lack of test sample lead to a variance in the results. Cora only has 1.6\% nodes in this proportion, so the variance is large.

\begin{table*}[h]
\centering
\caption{Mean classification accuracy of nodes range by homophily ratio in \modelname{}.}
\begin{tabular}{ccccccccccc}
\hline
Hom. Ratio Prop.(\%) & Cora      & CiteSeer & PubMed   & Cornell & Texas & Wisconsin &Actor\\ \hline
0 \textasciitilde 20  & 28.91 &   26.66    &   61.42   &      64.28 &      80.00 &      79.31 & 30.97            \\ 
20 \textasciitilde 40 & 50.00&   42.10   &   71.34    &     100.00 &      N/A &      77.77 &   42.13       \\ 
40 \textasciitilde 60 & 75.00 &  66.66    &    88.98   &      100.00 &      100.00 &      100.00 &  47.14       \\ 
60 \textasciitilde 80 & 91.66  &   96.07    &   92.53   &        100.00 &     N/A &     66.66 &   47.91         \\ 
80 \textasciitilde 100 & 97.86 &   92.44   &  94.46    &      100.00 &     N/A &      50.00 &     42.85    \\ \hline
\end{tabular}
\label{tab:nodes}
\end{table*}

\begin{table*}[h]
\centering
\caption{Mean classification accuracy of the gate in \modelname{}.}
\begin{tabular}{cccccccccc}
\hline
Cora      & CiteSeer & PubMed   & Cornell & Texas & Wisconsin & Actor\\ \hline
82.39 $\pm$ 0.31 & 90.88 $\pm$ 0.22  &  76.07 $\pm$ 0.55  & 65.10 $\pm$ 0.34 & 66.46 $\pm$ 0.25 & 97.67 $\pm$ 0.24 & 78.50 $\pm$ 0.42 \\ \hline
\end{tabular}
\label{tab:gate}
\end{table*}
\subsubsection{RQ4. Does \modelname{} learns the gate well?}

Table \ref{tab:gate} shows the test accuracy of the gate learned by \modelname{}. The ground truth is whether the neighbor node labels are consistent. It turns out that \modelname{} has learned the gate well on homophily graphs and some heterophily graphs.
Although the gate accuracy in Cornell and Texas datasets is worse than other datasets, the final node classification accuracy of these two datasets is much better than baselines. It suggests that the effectiveness of our method does not completely depend on the accuracy of the Gate. As long as the accuracy reaches a certain level, the performance of our model can be improved.

\section{Related Work}

Recently, Graph Neural Networks (GNNs) have been popular for graph-structured data on the node classification problems for it shows the strong ability to mine graph data. 
Deferred proposed one of the early versions of GNNs by generalizing convolutional neural networks (CNNs) for regular grids (e.g., images) to irregular grids (e.g., graphs) \cite{defferrard2016convolutional}. Kipf et.al proposed Graph Convolution Network (GCN) \cite{kipf-gcn} by simplifying the convolution operation on graphs. After that, other GNN models are proposed to improve the representative power on graph tasks in different application scenarios, such as GAT \cite{gat}, GIN \cite{gin} and GraphSAGE \cite{sage}. GAT introduced an attention mechanism to distinguish the importance of each neighbor node. GIN changes the aggregation function to enhance the ability of GNN for graph classification. GraphSAGE was designed for inductive learning for large graphs. 
These methods are generally proposed for the common graphs which satisfy the homophily assumption while other GNNs attempt to break the limits, including MixHop \cite{mixhop}, Geom-GCN \cite{pei2019geom}, and GCNII \cite{gcnii}. 
MixHop and Geom-GCN designed aggregation schemes that go beyond the immediate neighborhood of each node. The residual layer is introduced to leverage representations from intermediate layers to the previous layers. 
These models work well when the homophily ratio of the graph is high.

Recently, methods to research on the graph of heterophily arise as the previous GNNs fail to model these graphs. Xiao Wang et.al. proposed AMGCN \cite{AMGCN} to introduce new graphs generated by the similarity of features and use three separate GCN to learn the node representations simultaneously. CPGNN \cite{CPGNN} incorporates into GNNs a compatibility matrix that captures both heterophily and homophily by modeling the likelihood of connections between nodes in different classes. It overcomes two drawbacks of existing GNNs: it enables GNNs to learn from graphs with either homophily or heterophily appropriately and can achieve satisfactory performance even in the cases of missing and incomplete node features. 
Jiong Zhu et.al. \cite{zhu2020beyond} identified a set of key designs, including ego- and neighbor-embedding separation, higher-order neighborhoods, and a combination of intermediate representations, to boost the learning from the graph structure under heterophily.

\section{Conclusion And Future Work}
In this paper, by analyzing the node-level homophily ratio, we find that the homophily levels may have significant differences in different local regions of a graph. Even for a graph with good overall homophily property, many nodes are still in the mix states, i.e., its neighbors have quite inconsistent labels. Furthermore, by both the empirical experiment results and the theoretical analysis, we demonstrate the fundamental incapability of the traditional GNN models, e.g., GCN, on modeling such mix states where a node has homophily neighbors and heterophily neighbors mixed together. In order to model both homophily and heterophily, we propose the novel gated bi-kernel GNN model, which learns two kernels to model homophily and heterophily, respectively, and learns the selection gate to make the corresponding kernel selection. The significant gain of the extensive evaluation results demonstrates the effectiveness of the approach. 

The key idea of our approach is generic, which uses the selection gate and the multiple kernels to model the different kinds of neighbor relationships. 
In the next step, we plan to use the multi-kernel to model heterophily in a finer-grained manner for multi-class node classification where the total number of node classes is bigger than two. Under such a situation, there are multiple class label combinations, all belonging to heterophily. We could extend the current bi-kernel design to the muti-kernel design to model each kind of heterophily, i.e., each possible pair of class labels, respectively.

\section{Acknowledgements}
We want to thank Haitao Mao at UESTC and Xu Chen at PKU for their constructive comments on this paper.

\bibliographystyle{ACM-Reference-Format}
\balance 
\bibliography{ref}

\end{document}